\newcommand{\vc}[1]{\mathbf{#1}}
\newcommand{\vx}{\vc{x}}
\newcommand{\vu}{\vc{u}}
\newcommand{\cP}{\mathcal{P}}
\newcommand{\cT}{\mathcal{T}}
\newcommand{\todo}[1]{}
\newcommand{\junk}[1]{}
\newcommand{\myvspace}[1]{\vspace*{#1}}
\newtheorem{theorem}{Theorem}
\newtheorem{definition}{Definition}
\title{\LARGE \bf
Comprehensive Reactive Safety: \\
No Need For A Trajectory If You Have A Strategy
}
\author{Fang Da$^{1}$
\thanks{$^{1}$Fang Da is with QCraft Inc,
        {\tt\small fang@qcraft.ai}}%
\thanks{Preprint. Accepted to the 2022 IEEE/RSJ International Conference on
Intelligent Robots and Systems (IROS 2022).}
\thanks{Copyright © 2022 IEEE. Personal use of this material is permitted.
Permission from IEEE must be obtained for all other uses, in any current or
future media, including reprinting/republishing this material for advertising or
promotional purposes, creating new collective works, for resale or
redistribution to servers or lists, or reuse of any copyrighted component of
this work in other works.}
}
\begin{document}

\maketitle
\thispagestyle{empty}
\pagestyle{empty}

\begin{abstract}

Safety guarantees in motion planning for autonomous driving typically involve
certifying the trajectory to be collision-free under any motion of the
uncontrollable participants in the environment, such as the human-driven
vehicles on the road. As a result they usually employ a conservative bound on
the behavior of such participants, such as reachability analysis. We point out
that planning trajectories to rigorously avoid the entirety of the reachable
regions is unnecessary and too restrictive, because observing the
environment in the future will allow us to prune away most of
them; disregarding this ability to react to future updates could prohibit
solutions to scenarios that are easily navigated by human drivers.
We propose to account for the autonomous vehicle's reactions to
future environment changes by a novel safety framework, Comprehensive Reactive
Safety. Validated in simulations in several urban driving scenarios such as
unprotected left turns and lane merging, the resulting planning algorithm called
Reactive ILQR demonstrates strong negotiation capabilities and better safety at
the same time.

\end{abstract}

\section{INTRODUCTION}
\label{sec:intro}


Safety is of central concern in the development of autonomous driving systems,
frequently cited as one of the areas where AI drivers could bring
significant improvement over their human counterparts \cite{av_pub_health, waymo_safety_report2021}.
Indeed, equipped with vastly superior computational power
in solving differential equations and geometric intersections
while never making random mistakes or losing focus, AI vehicle operators
are expected to outperform humans in their ability to foresee and prevent
collisions. This outlook motivated a large body of work employing tools such as
reachability analysis (\cite{14althoff, 20pek, 21li}) to plan or certify safe
trajectories,
on the principle that a trajectory is safe if and only if it can be proven to be
disjoint from any region in space-time that could potentially be occupied by any
(dynamic or static) obstacle. While theoretically appealing, these approaches
prove inadequate when deployed in practical autonomous driving solutions.

One of the major difficulties facing these reachability-based approaches is that
they can be so conservative that it becomes impossible for the autonomous vehicle
to make any progress. Figure \ref{fig:reachability_fail} shows a common scenario
where the autonomous vehicle (abbreviated \emph{AV} below)
is trying to drive past a slow moving vehicle in an adjacent lane. Due to their
potential intention of lane change that is unknowable to the AV, the reachable
set of the slow vehicle will quickly fill up both lanes in a few seconds, which
is not
enough time for the AV to complete the overtake. The range of potential
acceleration on the slow vehicle also allows it to appear in a wide
range of longitudinal positions at any moment, therefore the vehicle's reachable
set essentially covers the entire space-time region in front of the
AV. As a result, in order to avoid any intersection with the reachable set, the
AV has to brake and trail behind the slow vehicle's longitudinal position, as
if following it as a lead vehicle, despite the fact that it is in a different
lane.


\begin{figure}[t]
\centering
\includegraphics[width=0.99\linewidth]{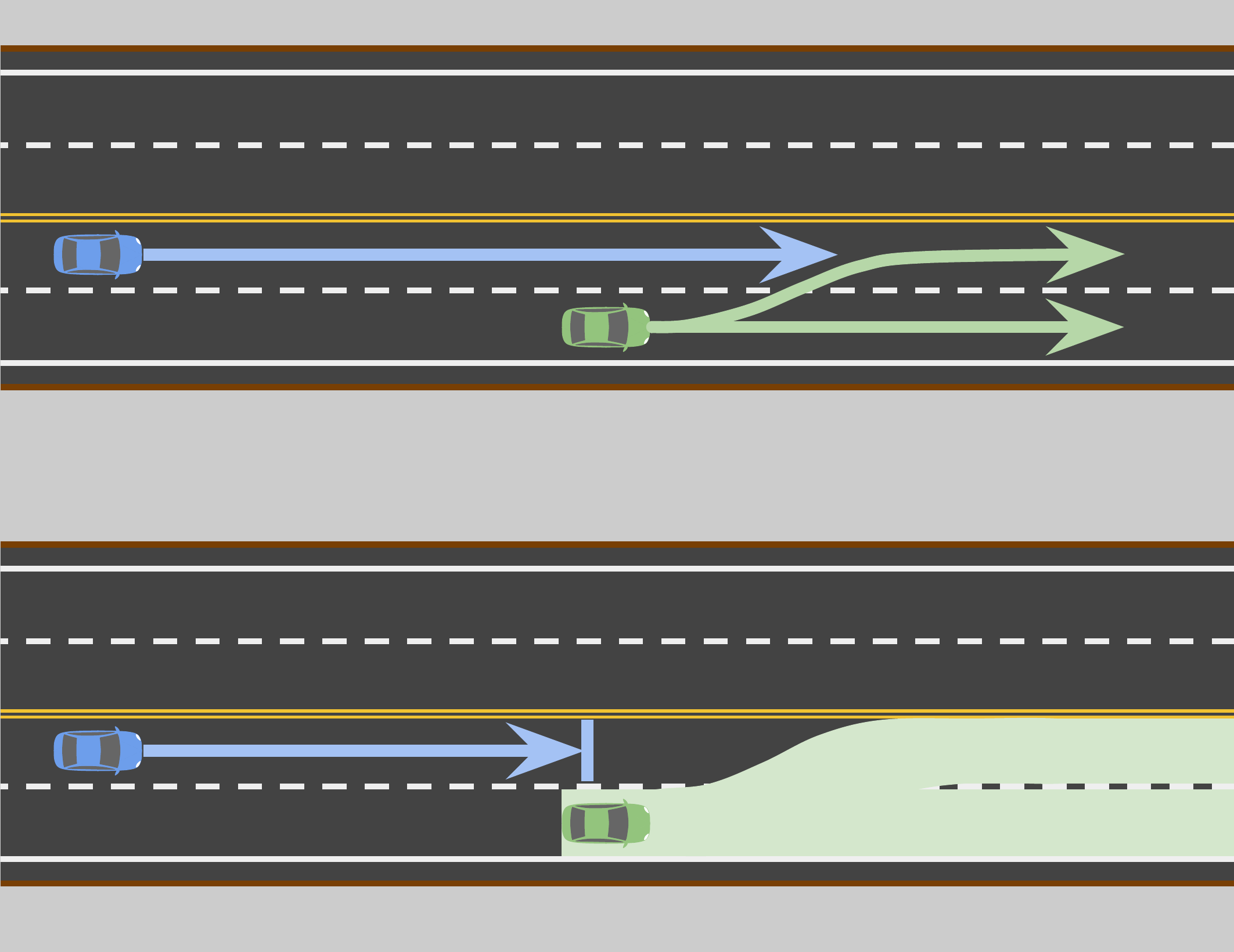}
\caption{A scenario where the fast autonomous vehicle (blue) is overtaking a slow
vehicle (green). The potential lane keeping and lane change trajectories of the
other vehicle are drawn as green arrows on the top; the corresponding (spatial)
reachable sets are drawn on the bottom. The unbounded growth of the
reachable set eventually blocks all the ways to make progress.}
\label{fig:reachability_fail}
\end{figure}

The way current safety techniques utilize
the reachability analysis is conservative by nature. Having the
trajectory being disjoint from the reachable set is well sufficient for proving
the absence of any possibility of collision, but it is far from necessary.
The reachable set of a dynamic object generally grows unboundedly over
time, and given enough planning horizon it will inevitably intersect with any
motion of the AV that makes forward progress. On the other hand, a human driver
readily knows that the slow moving vehicle is physically incapable of fully
blocking both lanes: no
matter which lane it decides to occupy, the other lane will be left open, and%
\footnote{Additionally we have to preclude collisions solely due
to malicious behavior by other vehicles in which the AV is by definition not at
fault, since such collisions are impossible to completely rule out by the AV's
effort alone \cite{rss}. We note that this is not enough
to help a reachability-based AV become unstuck because a potential lane change
early on into the AV's lane is not malicious, yet it blocks the AV's overtake
trajectory and causes braking. See section \ref{sec:overtake} for a concrete
experiment.}
safe overtake is possible. This conclusion is obvious to human drivers,
yet it is unrealized in the autonomous driving safety frameworks available
today. Why?

The key insight here is that in normal driving, the trajectory produced by the
planner is almost never fully executed as is. Modern urban autonomous driving
systems typically run at a frequency around 10Hz, and trajectories (usually with
a horizon of 10 seconds or more) are planned and executed in a pipeline, so
only the initial 100 millisecond portion (about one percent)
of each trajectory is actually
executed by the controller before it receives and switches to execute a new
trajectory. The new trajectory responds to changes in the environment by
incorporating up-to-date sensor data, and thus will in general be
slightly different unless the motion prediction used in the old trajectory is
perfectly deterministic and accurate.
Since by the time the AV gets to the latter portion of the trajectory, it would
have learned more about its surroundings and found smarter things to do,
why be alarmed when the limited knowledge that the AV possesses now cannot
assert safety at that distant time in the future?%
\footnote{See more discussion in Section \ref{sec:rel_work_reachability}.}
This latter portion of the trajectory will not be executed by the vehicle, so
rigorously checking for collisions on this portion of the trajectory does not
benefit the overall safety of the autonomous driving system.
This means that, ensuring the entire
trajectory is free of collision with any potential motion of any obstacle, which
is the paradigm adopted universally by all autonomous driving safety
frameworks known to us, should be reconsidered.
What we need is a new safety framework that
takes into account how the trajectory will change in reaction to environment
changes in the future, and ensures that the \emph{reaction}, instead of the
current \emph{trajectory}, is safe.

To this end, we propose the concept of \emph{Comprehensive Reactive Safety}
(CRS), a condition certifying that the autonomous driving system has the
ability to stay out of collision by reacting to new observations over the
planning horizon, without necessarily requiring it to have a
collision-free trajectory set in stone. This allows the system to respond to
different future scenarios with different tailored actions, as long as it can be
shown that it will have
time to observe and distinguish which scenario is coming to realization before
committing to an action. This freedom is crucial when a single sequence of actions
safely navigating all possible future scenarios does not exist, and thus no
trajectory will pass the traditional trajectory-based safety criterion. In other
words, under CRS, the autonomous driving system is planning strategies%
\footnote{We do not use the term policy here because of a subtle
difference: in reinforcement learning and imitation learning literature,
\emph{policy} is usually used to refer to a function specifying what to do in
each possible state of the system, while we take \emph{strategy} here to refer
to a function specifying what to do in each possible state realizable from the
current state over the planning horizon, \emph{i.e.} a restriction of policy.},
which describe what to do in response to each possible future situation, rather
than planning trajectories, which describe what to do, period.

To summarize, our main contributions are:

\begin{enumerate}
\item We propose the Comprehensive Reactive Safety, a novel safety framework
that, for the first time, accounts for how the AV reacts to future changes in
the environment. This framework drastically increases the flexibility of
planning and opens up possibilities for new solutions to scenarios that are
challenging to traditional reachability analysis. We give proof that the
much less restrictive CRS condition still guarantees safe operation of the
vehicle.

\item We present the Reactive ILQR algorithm, a practical planning algorithm
that satisfies CRS. With side-by-side comparisons in four representative classes
of scenarios commonly found in urban driving, we demonstrate that the proposed
RILQR planner is more effective in negotiating solutions when interacting with
other vehicles, and capable of resolving interactions in ways not possible with
the traditional reachability-based planners, all while improving safety.

\end{enumerate}

The rest of the paper is organized as follows. Section \ref{sec:related_work}
reviews reachability analysis and AV safety approaches. Then, Section
\ref{sec:crs} defines CRS and proves its validity in certifying safe driving,
and Section \ref{sec:rilqr} outlines the RILQR algorithm, which is evaluated in
Section \ref{sec:exps} in common scenarios found on urban public roads.

\section{RELATED WORK}
\label{sec:related_work}

\subsection{Safety enforcement}

Safety is typically defined as the condition and certification that no
collisions exist between the trajectories of the AV and the obstacles,
potentially accounting for uncertainty in both. Planners ensure the safety of
the trajectories they produce by one of two categories of approaches: either
construct the trajectories with safety as a constraint through search
algorithms \cite{11mcnaughton, av_junior} and constrained optimization
algorithms \cite{12xu, av_bertha}, or apply safety verification on the
trajectories constructed with more relaxed
collision constraints, and, upon finding a collision, fall back to the last
verified trajectory \cite{14althoff}, override with a fail-safe trajectory
\cite{20pek}, or perform corrective modifications to restore safety
\cite{17chen}.

Constrained optimization techniques are well studied in the numerical
optimization literature \cite{convex_opt, 14bertsekas}. Motion planning
algorithms for autonomous driving have been developed using various optimization
techniques such as sequential quadratic programming (SQP) \cite{av_bertha},
linear programming (LP) \cite{12xu}, and mixed integer quadratic programming
(MIQP) \cite{16qian_miqp}. Constrained ILQR \cite{cilqr1, cilqr2}, an
extension of the Iterative Linear Quadratic Regulators (ILQR) \cite{ilqr, ilqg}
to support arbitrary state constraints using barrier functions, can be used to
enforce safety constraints as well. We build
our implementation on top of Constrained ILQR.

\subsection{Sources of uncertainty}

Autonomous driving systems operating in the real world are faced with a number
of sources of uncertainty potentially relevant to safety, and they have been the
subject of many studies. Control noise and environment disturbance can be
bounded by feedback control, resulting in the expansion of the AV trajectory
into an invariant set that the controller can guarantee never leaving \cite{15althoff}.
Measurement (in positioning and perception) noise are usually treated by padding
the safety constraints with buffers. Uncertainty in the AV's knowledge about the
environment due to sensor limitations, such as occlusion, can be addressed by
hallucinating obstacles in occluded regions \cite{18orzechowski}. For dynamic
obstacles such as vehicles, their unknown future motions and intentions are a
significant source of uncertainty, for which there are conservative estimation
techniques using reachability analysis \cite{14althoff, 17chen, 20pek}, as well
as a blooming body of research on data-driven motion prediction techniques
\cite{vectornet, tnt, lanegcn} \todo{cite PAGA for camera-ready}.
In particular, Li \emph{et al} \cite{21li} seek to mitigate the conservativeness
of reachability analysis by classifying the object behavior into one
of the clusters learned offline, each of which then has a more limited
reachability. Our work shares their premise, but do not rely on data-driven
procedures that generally lose safety guarantees, and only rule out portions of
reachable sets by evidence from observed object motion.

\subsection{Reachability}
\label{sec:rel_work_reachability}

Reachability analysis is a tool for studying dynamical systems. Some techniques
formulate the process as a differential game and solve its Hamilton-Jacobi PDE
\cite{hamilton_jacobi_reachability}, while others use conservative linearization
to inclusively approximate the nonlinear dynamics \cite{14althoff}. In
autonomous driving applications, reachability analysis is used to compute a
conservative superset of spatial locations an object could reach as a result of
its unknown
intentions and actions, under certain kinematic, dynamic or behavioral
assumptions.

Behavioral assumptions on the objects are adopted to curb the growth of
reachable sets, usually by formalizing traffic laws
\cite{20pek, rss, 12vanholme}; planners avoiding the resulting smaller
reachable sets cannot guarantee to not collide, but can guarantee to not be at
fault in a collision. While it may help to prevent the reachable set
from blocking the AV in some cases, the state of the art in this research
direction has generally been limited to simple rule sets around the
driving direction constraint and the safe distance, struggling to handle the
complexity in real world traffic. The interpretation and formalization of human
traffic rules is a much more difficult task than it might first appear \cite{rss}.
Another way to keep the reachable sets manageable is to limit the planning
horizon as done by Pan \emph{et al} \cite{20pan}, who echo our observation about
the unrealistic spread of reachability in the long term. However, relying on
tuning the horizon to keep reachability under control makes the planner brittle,
and the lack of convergence to the desired behavior as the horizon tends to
infinity is theoretically unsatisfactory. Our
proposed safety framework is not a replacement of reachability analysis, but
rather a more flexible way of applying it, aimed at addressing its weakness of
conservativeness. It can be used with any existing technique to compute
reachable sets.

It is worth pointing out that, although planning around all reachable sets is
unnecessarily conservative because new observations in the future will enable
more targeted responses, this is not true in the worst case
scenario where the autonomous driving system suffers from severe faults and
loses its sensing capabilities after making a plan, which, although rare, is a
risk that cannot be ignored. In such a case, execution of a plan that requires
observation updates will not be possible, and safety can only be achieved by an
open-loop trajectory that circumvents all currently known reachable sets. This
may very well be infeasible, again because of how large the reachable sets could
be, but that is just the harsh reality of trying to stay safe in this extremely
challenging ``glimpse-and-then-go-blind" situation. When designing a safe
autonomous driving system, fallback mechanisms must be included to make the AV
``limp" to a minimum safe state in hazardous situations like this, possibly
involving reachability analysis. However, the necessity of evil in the worst
case is no excuse for not holding ourselves to higher standards in the average
cases, and with our proposed CRS framework, we do.

\section{METHOD}
\label{sec:method}

\subsection{Comprehensive Reactive Safety}
\label{sec:crs}

Consider a given moment in time where the autonomous driving system is supposed
to make a driving plan based on the latest perception and prediction. Without
loss of generality, we call this moment $t=0$. Let $T$ denote the planning
time domain $[0, t_{end}]$ and $X$ the spatial domain (usually the 2D space on
the ground). A trajectory (for the AV or the objects) is a function
$\tau: T \to X$, and the set of all trajectories is denoted by
$\cT = \{\tau | \tau: T \to X\}$. Given a trajectory $\tau$, we use
$C_{\tau}$ to represent the corresponding swept-volume trajectory
$C_{\tau}: T \to \cP(X)$
\footnote{$\cP$ is the power set operator: $\cP(S)=\{s|s \subseteq S\}$.}
such that $C_{\tau}(t)$ is the collision volume of the AV or the object at time
$t$.

Given the set of objects $O$ reported by the perception subsystem, the prediction
subsystem predicts the potential range of motion for each object $i \in O$ as a
set of possible trajectories $P_i \subseteq \cT$. This set
can also be seen as the spatial-temporal reachable set of the object. In
addition, an interaction-aware predictor would be able to give the possible
combinations of individual object motions (taking into account interaction
constraints such as, when two objects approach an intersection from different
directions, one of them has to yield), which we call \emph{futures}
$F = \{(p_1, p_2, ..., p_N) | p_i \in P_i\} \subseteq P_1 \times P_2 \times ... \times P_N$,
a description of all the possible ways the future could unfold. We use the
subscript notation to reference the trajectory of a particular object in a
future: for $f \in F$, $f_i \in P_i$ is the trajectory of object $i$ that will
be realized in this future.

The planning subsystem is responsible for producing a driving plan for the AV
to follow. The traditional planner produces a trajectory
$\tau \in \cT$, and the traditional safety criterion is that it has no collision
with any prediction:
$C_{\tau}(t) \cap C_p(t) = \emptyset, \forall p \in P_i, \forall i \in O, \forall t \in T$.
This condition can be equivalently written in terms of the futures:
$C_{\tau}(t) \cap C_{f_i}(t) = \emptyset, \forall i \in O, \forall f \in F, \forall t \in T$.
On the other hand, in the spirit of ensuring safety by being able to react to
anything that could happen in the future, we propose that the planner
should instead produce a \emph{strategy} which maps potential future situation
realizations to trajectories, $\sigma: F \rightarrow \cT$.
In other words, the planner plans a trajectory for each future scenario.

To ensure the strategy can be followed by the AV (without requiring it to
possess supernatural prophetic powers), it must satisfy certain \emph{causality}
constraints: the trajectory must ``react" to the future variation \emph{after}
it has occurred. For example, an evasive lane change trajectory to circumvent an
object
cutting in must not begin the lane change before the object shows signs that it
is about to cut in.

\begin{definition}[Requirement I: Reaction Causality]
\label{def:reaction_causality}
A strategy $\sigma$ satisfies the Reaction Causality Requirement if for any two
futures $f, g \in F$ and any time $t \in T$, the condition
$\sigma(f)(t) = \sigma(g)(t)$ holds as long as we have
$f_i(t') = g_i(t'), \forall i \in O, \forall t' \in [0, t - \delta]$, where $\delta$
is a sensing delay.
\end{definition}


Here, the sensing delay $\delta$ is the latency between the time a change occurs
in the environment and the time the autonomous driving system finishes
processing the sensor data that pick up the change to be able to recognize it.
Intuitively, Requirement I states that if a strategy responds to two futures
with different trajectories, the trajectories should not differ from each other
at an earlier time than the time when the two futures diverge plus the sensing
delay $\delta$, which is the earliest time possible for the autonomous driving
system to be able to differentiate between the two futures and choose a reaction
accordingly.

\begin{figure}[t]
\centering
\includegraphics[width=0.99\linewidth]{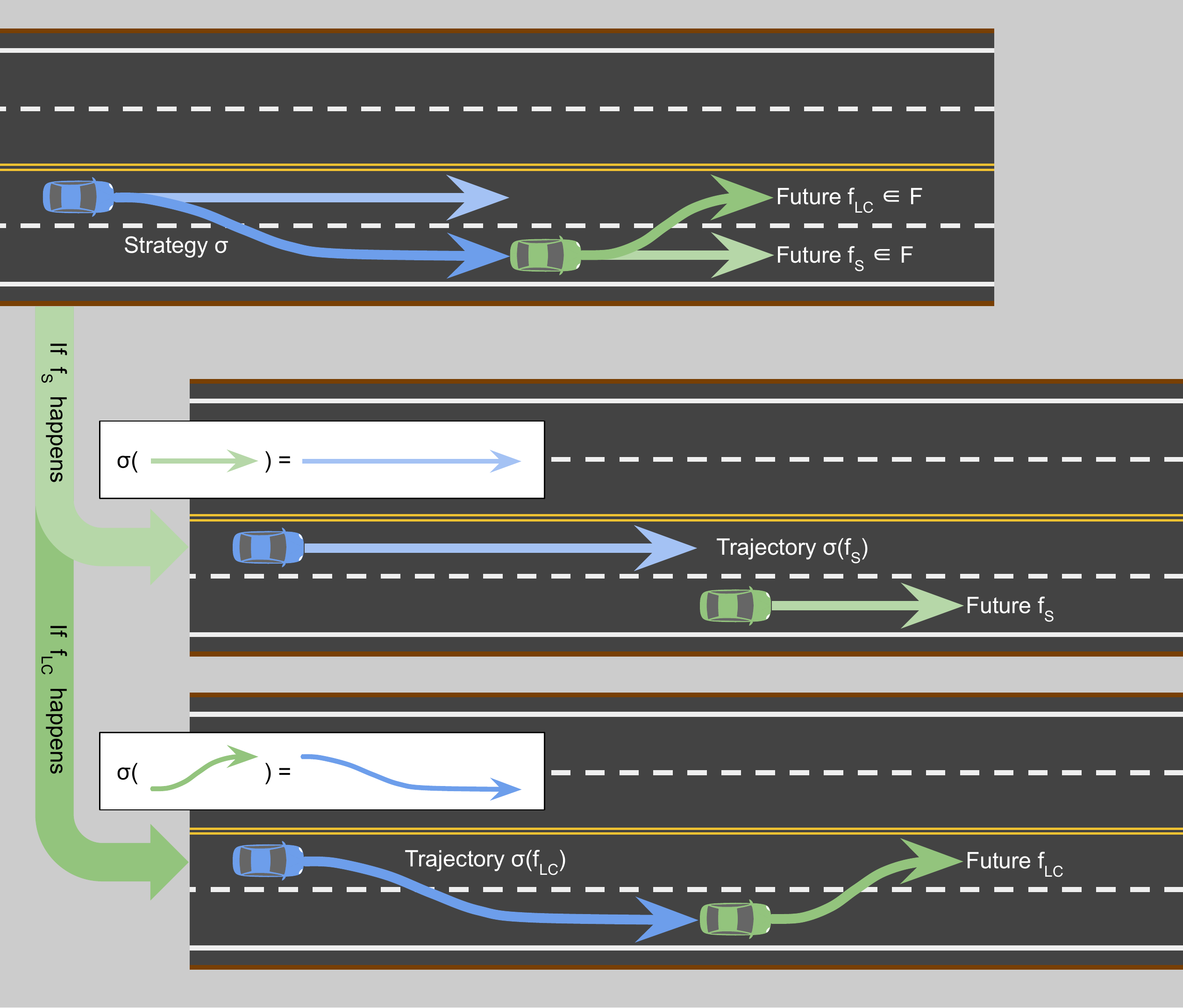}
\caption{A strategy reacting to the two possible futures in the overtake
scenario, labeled LC (for lane change) and S (for straight) respectively.}
\label{fig:strategy}
\myvspace{-2mm}
\end{figure}

The second condition on the strategy is conceptually similar to the traditional
trajectory safety:

\begin{definition}[Requirement II: Reaction Safety]
\label{def:reaction_safety}
A strategy $\sigma$ satisfies the Reaction Safety Requirement if the condition
$C_{\sigma(f)}(t) \cap C_{f_i}(t) = \emptyset$ holds for all $f \in F$, all time
$t \in T$ and all objects $i \in O$.
\end{definition}

We would like to note that Requirement II is significantly weaker than the
traditional safety condition
$C_{\tau}(t) \cap C_{f_i}(t) = \emptyset$,
because the former only requires each trajectory to be free of collision with
its own
corresponding future, rather than with all possible futures. This is the key
distinction that allows the AV to avoid being stuck when the reachable sets
become too spread: each situation can have its own particular solution, and
the solution for one situation does not need to concern itself with handling
other situations. With these two requirements, we are ready to define the
safety condition based on reactions, called Comprehensive Reactive Safety (CRS).

\begin{definition}[Comprehensive Reactive Safety, CRS]
\label{def:crs}
A strategy $\sigma$ is said to satisfy CRS if it satisfies both Requirement I
and Requirement II.
\end{definition}

CRS guarantees safety in the sense of the following theorem, which is our main
result:

\begin{theorem}[Reactive Driving by a CRS Strategy is Safe]
\label{theo:crs}
Consider an autonomous driving system equipped with a strategy $\sigma$
satisfying CRS at time $0$. At any future time $t \in T$, the system would have
observed the motion of each object over time $[0, t - \delta]$; let the motion
of object $i$ so far be denoted by $m_i: [0, t - \delta] \to X$ for each
$i \in O$.
If the system executes trajectory $\sigma(f)$ for any future $f \in F$ that is
compatible with the observed motions so far, that is, $f$ satisfying
$f_i(t') = m_i(t'), \forall t' \in [0, t - \delta], \forall i \in O$, then the
system is
safe from collision with any object.
\end{theorem}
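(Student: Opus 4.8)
The plan is to show that, at any time $t$, whatever compatible future $f$ the system happens to be executing, its committed position $\sigma(f)(t)$ is forced by causality to coincide with the position dictated by the \emph{true} future that is actually unfolding, and that Requirement II then rules out a collision at that position. First I would fix the future $f^\star \in F$ whose object trajectories match the motion that physically occurs; since $F$ is meant as a conservative superset of possible futures, we may assume the realized motion is described by some $f^\star \in F$. Because the observation $m_i$ is just the true motion seen through the sensing delay, we then have $m_i(t') = f^\star_i(t')$ for all $t' \in [0, t-\delta]$ and all $i \in O$, so $f^\star$ is itself compatible with the observations.

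Next I would compare the executed future $f$ with $f^\star$. By hypothesis $f$ is compatible with the observations, so $f_i(t') = m_i(t') = f^\star_i(t')$ for every $i \in O$ and every $t' \in [0, t-\delta]$; that is, $f$ and $f^\star$ agree on all object trajectories up to time $t-\delta$. Requirement I (Reaction Causality) applied to the pair $f, f^\star$ then yields $\sigma(f)(t) = \sigma(f^\star)(t)$, and hence $C_{\sigma(f)}(t) = C_{\sigma(f^\star)}(t)$: the AV's actual occupied volume at time $t$ is exactly the one prescribed by $\sigma(f^\star)$, independently of which compatible future it selected. Finally, Requirement II (Reaction Safety) applied to $f^\star$ gives $C_{\sigma(f^\star)}(t) \cap C_{f^\star_i}(t) = \emptyset$ for every $i \in O$. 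Since object $i$ physically occupies $C_{f^\star_i}(t)$ at time $t$ and the AV occupies $C_{\sigma(f)}(t) = C_{\sigma(f^\star)}(t)$, the two are disjoint and no collision occurs. As $t \in T$ was arbitrary, safety holds throughout the horizon.

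The crux is the step that bridges the \emph{executed} future $f$ and the \emph{true} future $f^\star$: these need not be equal, because the system only ever sees the environment up to $t-\delta$ and may commit to any future consistent with that partial observation. The reason the argument still closes is that Requirement I guarantees all such consistent choices produce the same trajectory \emph{at} time $t$, so the safety certificate that Requirement II attaches to $f^\star$ transfers to whatever the system is actually doing. This is precisely where the sensing delay $\delta$ enters: causality only forces agreement up to the time at which the divergence between $f$ and $f^\star$ could have been perceived, which is why the requirement is stated with the $\delta$ margin rather than demanding agreement up to $t$ itself. The one modeling assumption I would flag is that the realized motion is captured by some $f^\star \in F$, i.e. that the predictor's future set does not miss the true outcome; this is an assumption on the conservativeness of prediction rather than something the strategy alone can establish.
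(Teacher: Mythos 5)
Your proof is correct and follows essentially the same route as the paper's: identify the realized future $f^\star \in F$, use Requirement I to show every compatible $f$ yields the same AV position at time $t$ as $\sigma(f^\star)$, and invoke Requirement II on $f^\star$ to rule out collision. Your explicit flagging of the assumption that the realized motion is captured by some $f^\star \in F$ is a point the paper leaves implicit, but the argument is the same.
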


\begin{proof}
Among all the possible futures in $F$, one particular $f^*$ will be the one
eventually occurring in reality. According to Requirement II, if the system
executes $\sigma(f^*)$ over $T$, it is safe from collisions. At any time
$t \in T$, all futures $f$ compatible with observed motions $m_i$ so far,
including $f^*$, are identical up to time $t - \delta$ by definition. According
to Requirement I, their corresponding strategy trajectories $\sigma(f)$,
including $\sigma(f^*)$, are identical up to time $t$. Therefore, executing
$\sigma(f^*)$ is equivalent to executing the $\sigma(f)$ for any compatible $f$
for all $t \in T$, which is what the system does.
\end{proof}

Once the planner successfully plans or verifies a strategy by CRS, it has
a recipe to avoid collisions, even though it does not specify a particular
trajectory for
the controller to follow. As long as the AV keeps observing the environment and
executing the corresponding reactions in a timely manner, it will be safe from
collisions, while enjoying the extra freedom not available in a traditional
trajectory-based framework.

\subsection{Reactive ILQR}
\label{sec:rilqr}

In the previous section we presented a novel safety condition that eliminates
the requirement of conservatively avoiding all reachable sets. However,
implementing such a framework in practice is challenging, due to the transition
from trajectories to strategies. In this section, we present a practical
algorithm in an attempt to leverage the flexibility of CRS in the familiar
Iterative Linear Quadratic Regulator (ILQR) framework. The resulting algorithm,
called \emph{Reactive ILQR}, outputs CRS-worthy strategies, instead of
trajectories. As an early exploration of strategy planners in lieu of trajectory
planners, we focus on the necessary changes in the planner such as the
representation of strategy, and keep the supporting components such as
prediction and decision
as simple as possible, while noting that investigation of those areas
could be interesting research directions.

\subsubsection{Strategy as a tree of trajectories}

Requirement I dictates that all trajectories in a strategy share some prefixes,
some longer than others depending on how early their corresponding futures
diverge. This naturally suggests a representation of strategy as a
tree of trajectories, with the depth direction being time. Two trajectories
sharing a prefix up to time $t$ is stored as a tree with the trunk corresponding
to the common prefix segment over time $[0, t]$, and the two branches to the two
different suffix segments over time $(t, t_{end}]$. Similarly, the futures $F$
are represented as a tree as well.

\subsubsection{Strategy branching according to future branching}

Since the strategy's trajectory $\sigma(f)$ is allowed to be different for each
future $f$,
it follows that the strategy should branch whenever the future branches, or in
other words, the strategy tree should have the same topology as the future tree,
in general. Per Requirement I, the branching point on the strategy tree must
not be earlier than its counterpart on the future tree plus $\delta$, but it can
be later than that, as some situations do not require immediate reactions. For
simplicity, we trivially branches the strategy at time $\delta$ after each
future branching point, in which case situations that do not
require different reactions will simply result in identical strategy branches.

\subsubsection{Strategy optimization by Reactive ILQR}


\begin{wrapfigure}{r}{0.24\textwidth}
\myvspace{-5mm}
\begin{center}
\includegraphics[width=1.0\linewidth]{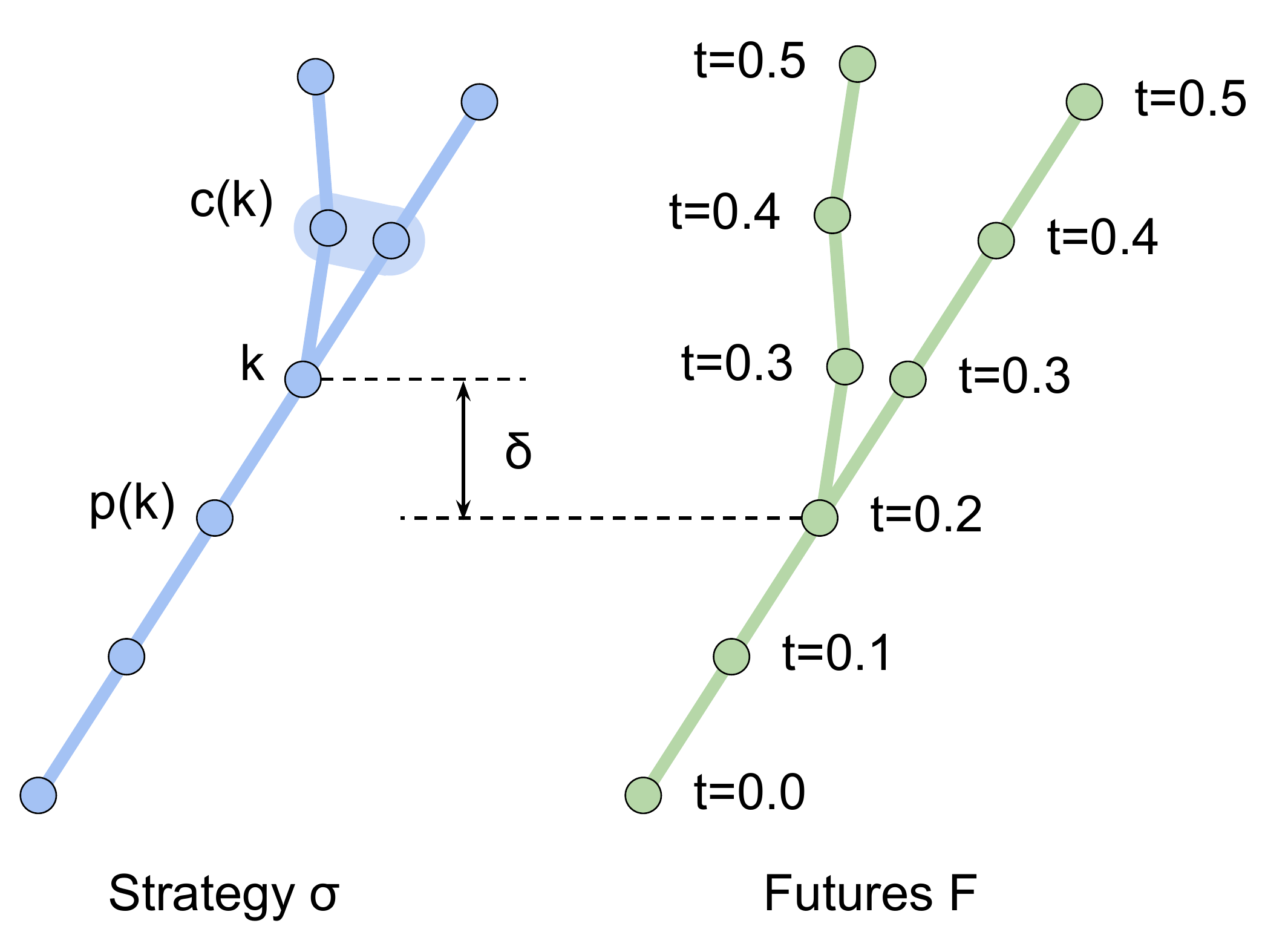}
\end{center}
\myvspace{-2mm}
\end{wrapfigure}

Once the topology of the strategy tree is known, it can be discretized and
variationally optimized, subject to the safety constraints posed by Requirement
II. We temporally discretize the strategy tree at a fixed time step $\Delta t$
(for example, $0.1s$)
into a number of points referenced by a symbol $k$. Unlike the step index in
the original ILQR, $k$ here is an identifier for a point in the strategy tree,
each with a time $t(k)$ that is not necessarily unique, because multiple
branches from the same parent have the same time values.
The root is the only point with time 0: $t(root) = 0$. Each point $k$ (except
for the root) always has one unique parent point $p(k)$ which is
its immediate previous step: $t(k) = t(p(k)) + \Delta t$. Each point $k$ (except
for the leaf points at the end of the horizon) also has one or more child points
$c(k) = \{j|p(j)=k\}$ which are its immediate next steps:
$t(j) = t(k) + \Delta t, \forall j \in c(k)$. A partial order $\prec$ can be
naturally defined for points on a same path from the root; for example,
$k \prec j$ if $k$ is an ancestor of $j$.
Most points have exactly one child though, because branching
occurs sparsely. Branching points segment the tree into a number of linear
chunks, each a trajectory segment containing a number of points, ending with
one having multiple children pointing to the subsequent branch chunks. Extending
the ILQR algorithm to work with such a tree topology leads to a new
algorithm we call \emph{Reactive ILQR}, detailed below.

When considered in isolation, each segment is just a linear trajectory sequence
that ILQR can be applied on straightforwardly. The only tricky part is that the
value function parameters (such as gradients and Hessians) in the backward pass
and the optimal states and controls in the forward pass need to be propagated
across branching points where temporally adjacent trajectory segments meet.
Concretely, given the AV state transition function $g_k$ and the cost function
$l_k$ dictated by the application (potentially varying with $k$), the optimal
control $\vc{u}^*$ (and the associated states $\vc{x}^*$) on a strategy tree
$\sigma$ is given by the following optimization:
\begin{align}
\label{eq:rilqr_opt}
& & \vx^*, \vu^* &= \underset{\vx, \vu}{\arg \min} \sum_{k} w_k l_k(\vx_k, \vu_k),\\
&\mathrm{s.t.} & \vx_k &= g_k(\vx_{p(k)}, \vu_k),\\
& & \vx_{root} &= \vx_{0},\\
\label{eq:h_k}
& & h_k(\vx_k) &\ge 0,
\end{align}
where $\vx_0$ is the current AV state,
$h_k$ is a collision constraint enforcing Requirement II (discussed
later), and $w_k$ is a weight factor normalizing the contribution across sibling
branches with $w_k = \sum_{i \in c(k)} w_i$. One could treat these weights
as probabilities to bias the solution towards certain outcomes (futures);
for simplicity we distribute the weights evenly across all sibling branches at
each branching point by setting $w_k = \frac{1}{|c(p(k))|} w_{p(k)}$, and
$w_{root} = 1$.

The (negative) value function $v_k$ (also known as the cost-to-go function) at
point $k$ on the tree topology is given by
\begin{align}
v_k(\vx_k) = \min_{\{\vu_j | j \succ k\}} \sum_{j \succ k} w_j l_j(\vx_j, \vu_j),
\end{align}
with the following recursion (the \emph{Bellman equation}):
\begin{align}
&v_k(\vx_k) \\
&= \min_{\{\vu_j | j \succ k\}} \sum_{j \in c(k)} \left( w_j l_j(\vx_j, \vu_j) + \sum_{i \succ j} w_i l_i(\vx_i, \vu_i) \right)\\
\label{eq:bellman}
&= \sum_{j \in c(k)} \left( \min_{\vu_j} w_j l_j(g_j(\vx_k, \vu_j), \vu_j) + v_j(g_j(\vx_k, \vu_j)) \right).
\end{align}
ILQR approximates the value function with a quadratic form by its Hessian and
gradient, a form maintained inductively by (\ref{eq:bellman}) provided that
$g_k$ and $l_k$
are in turn approximated linearly and quadratically respectively. This
accomplishes the extension of ILQR to a tree topology. In general, each
branching in the strategy tree doubles the number of variables to be optimized
after the branching point, so the computation cost on a Y-shaped strategy will
be between 1x and 2x of that on a linear trajectory. More branching will further
increase the computation cost, but in the vast majority of urban driving
scenarios, there is no more than one object of immediate concern, so the ILQR
solver for strategy trees is still highly efficient.

The last ingredient in Reactive ILQR is the collision constraints $h_k$ in
(\ref{eq:h_k}). For each point $k$, its constraint $h_k$
only needs to rule out collisions with the futures corresponding to this point:
\begin{align}
h_k(\vx_k) = \min_{f \in \phi(k), i \in O} d(C_{f_i}(t(k)), \vx_k),
\end{align}
where operator $d$ computes the Euclidean distance between the collision volumes
of the AV and an object,
and $\phi(k) = \{f | f \in F, k \in \sigma(f)\}$ is the set of futures
corresponding to point $k$, which can be obtained as the strategy tree is
built according to the structure of the future tree. Once constructed,
the constraints $h_k$ can be enforced in the optimization problem
(\ref{eq:rilqr_opt}-\ref{eq:h_k}) with methods like Constrained ILQR.

\section{EXPERIMENTS}
\label{sec:exps}

The proposed Reactive ILQR algorithm is quantitatively evaluated in a few
representative scenarios. As a baseline, we implement a non-reactive CILQR
planner that computes a single trajectory to avoid all reachable sets. For both
planners, the predicted reachable sets are the same. For simplicity and
comparability, the prediction module is a heuristic kinematic
predictor that outputs an uncertainty box on each time step to represent the
reachable set, and no explicit behavior decision module (making decisions like
``pass" or ``yield") is used. Time discretization is at a resolution of
$\Delta t=0.1s$, and the sensing delay $\delta$ is set to $0.1s$.
The planners are run in a proprietary simulator
with HD maps set up, objects spawned and scripted, and evaluation metrics (such
as collision and interaction outcome) calculated to each scenario specification,
detailed below.

The scenarios described below (Figure \ref{fig:exp_overview}) have infinitely
many variations, parameterized by
manually-designed parameters such as the initial positions of the interacting
vehicles and the actions they take (\emph{e.g.} whether to brake). For each
scenario, we run both RILQR and the baseline CILQR on 100 samples in the
scenario variation space, and compare aggregate statistics to evaluate their
capabilities in staying safe and making progress. For the crossing and
unprotected turn scenarios, the 100-sample batch
experiments are then repeated 10 times to obtain the error bars on the
statistics.

\begin{figure}
\centering
\includegraphics[width=1.0\linewidth]{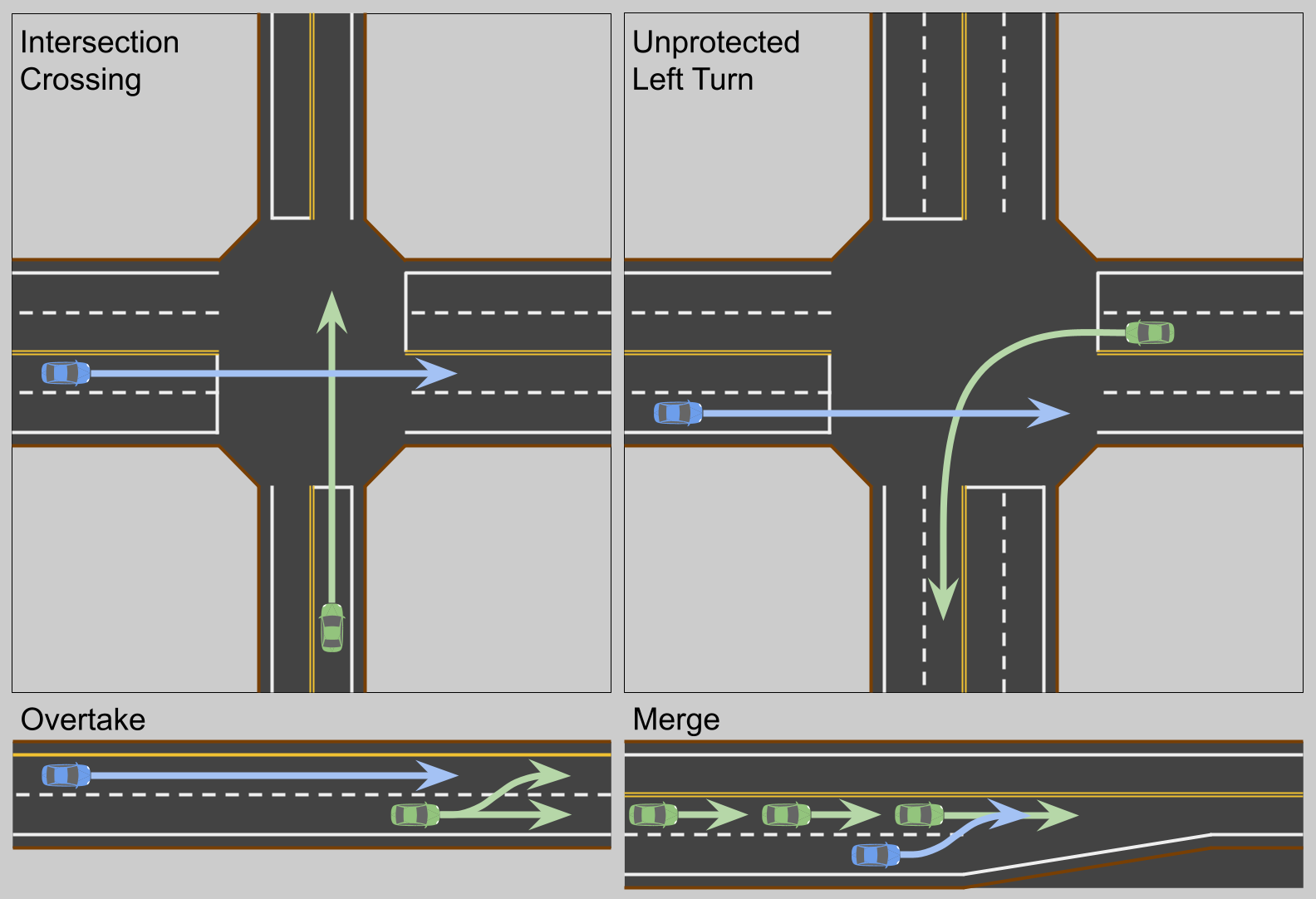}
\caption{The setup of the four representative scenarios. The blue vehicle is the
AV while the green vehicles are the BVs.}
\label{fig:exp_overview}
\myvspace{-3mm}
\end{figure}

\subsection{Intersection crossing}

A common interaction that demands the AV to consider multiple possible actions
of the other vehicle (referred to as the \emph{background vehicle}, \emph{BV})
is the intersection crossing scenario. Both the AV and the BV approach an
uncontrolled intersection from different directions, and at least one of them
must yield to avoid collisions. In our setup the BV is spawned at a location
equidistant to the intersection as the AV, and scripted to either maintain its
speed at $10m/s$ (which is also the AV's initial and target speed) or slow down
before
reaching the intersection by braking at $1.5m/s^2$ for two seconds. Although
the BV's behavior is limited to the binary choice of braking or not braking in
any variation of this scenario, the reachability analysis at any moment cannot
assume the BV will not vary its acceleration, and as a result the reachable
boxes of both choices overlap each other forming a continuous prohibited zone
for a non-reactive planner, making it unable to proceed in either case,
manifested as an excessive inclination to yield. A
reactive planner, on the other hand, would be able to plan a strategy that
yields when the BV does not brake, and passes when the BV brakes. Simulations
sampling the variation of initial longitudinal positions and initial speeds of
the AV and BV confirm this conjecture: while both RILQR and the baseline ensure
that no collisions occur, the AV driven by RILQR is able to pass the BV much
more often (Table \ref{tab:cross_and_upl}). Figure \ref{fig:cross} shows the
behavior of RILQR in a typical variation.

\begin{table}
\centering
\begin{tabular}{lrr}
\toprule
Planner         & Crossing           & Unprotected left turn \\
\midrule
Baseline        & $11.6\% \pm 3.6\%$ & $33.8\% \pm 4.9\%$ \\
\bf{RILQR}      & $35.6\% \pm 5.3\%$ & $57.4\% \pm 5.3\%$ \\
\bottomrule
\end{tabular}
\caption{Passing percentages in the intersection crossing (Fig. \ref{fig:cross})
and unprotected left turn (Fig. \ref{fig:upl}) scenarios. Passing is defined as
moving through the intersection at an earlier time than the other vehicle, and
yielding the opposite. RILQR achieves a higher passing ratio because it does not
conservatively avoid all potential BV actions which is only possible through
excessive yielding.}
\label{tab:cross_and_upl}
\myvspace{-4mm}
\end{table}

\begin{figure}
\centering
\includegraphics[width=0.99\linewidth]{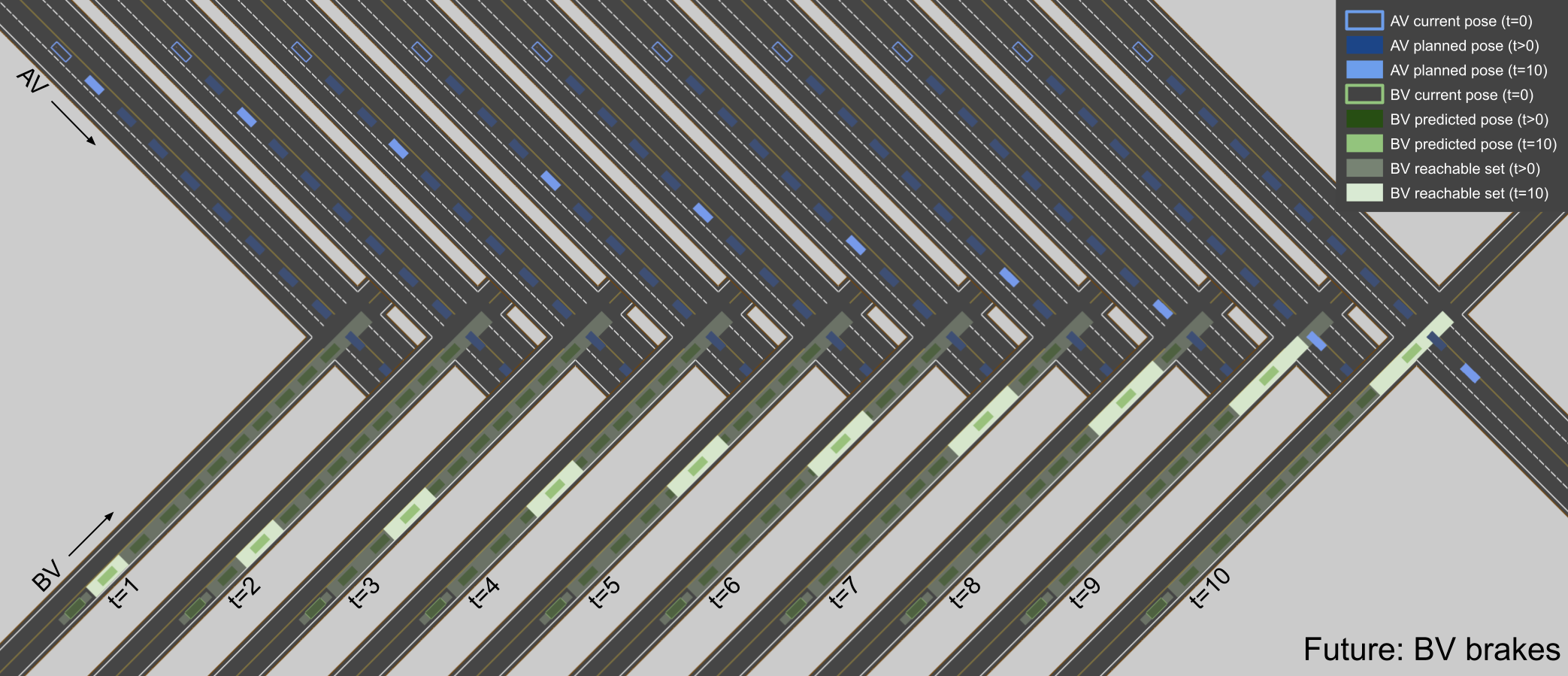}
\includegraphics[width=0.99\linewidth]{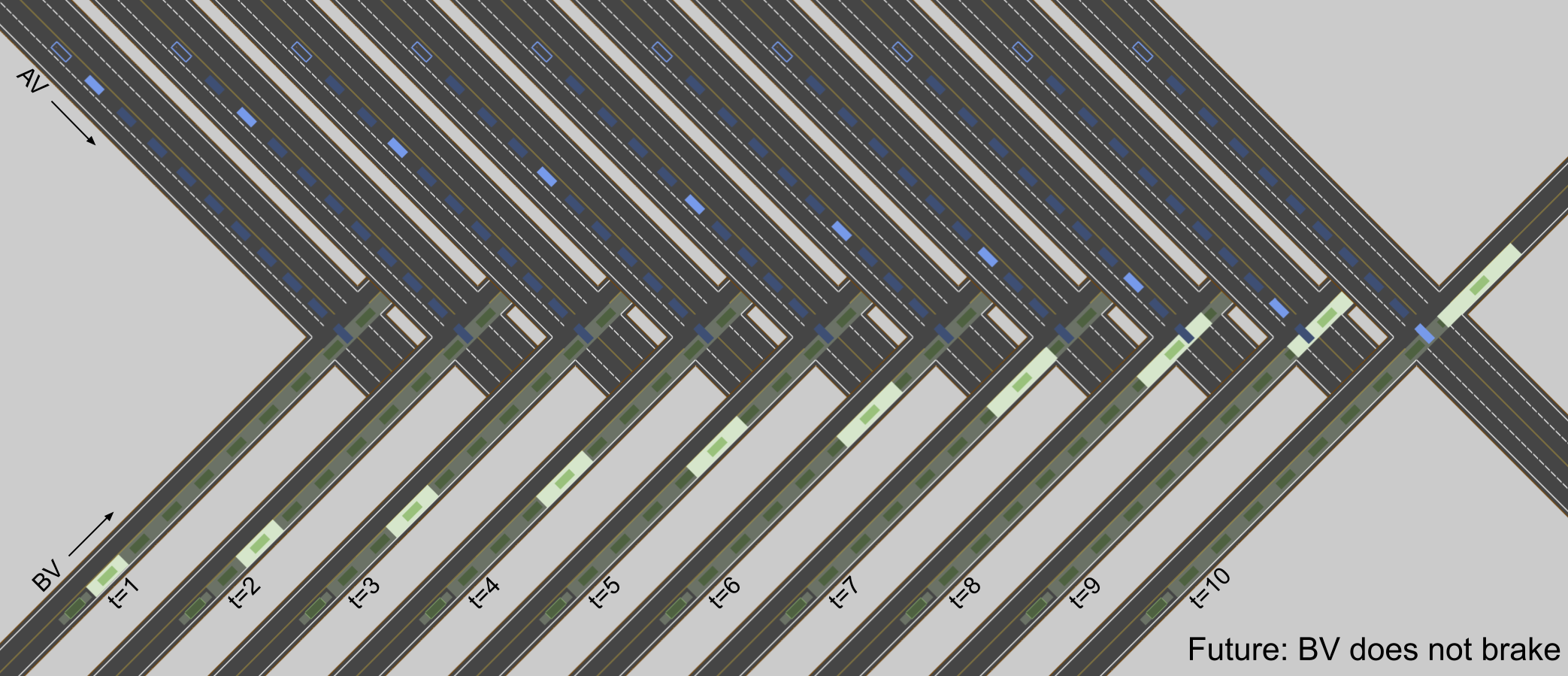}
\caption{RILQR strategy in a crossing scenario variation, at a planning iteration
3 seconds into the simulation. The future where the BV brakes (and thus the AV
proceeds) is shown on the top; the future where the BV does not brake (and thus
the AV yields) is shown on the bottom. Each frame in the horizontal sequence
advances one second into the future, highlighted in bright blue (AV pose in the
strategy) and green (BV pose in the prediction), with the brighter green areas
depicting the reachable sets. Note that all of these are planned at a single
simulation time $t_{sim}=3s$ which is before the potential braking time of the
BV, so the planner cannot know whether the BV ends up braking or not in this
particular scenario variation, nor does it require the knowledge because its
strategy is ready for both.}
\label{fig:cross}
\myvspace{-3mm}
\end{figure}

\subsection{Unprotected left turn}

A similar situation arises when the BV comes from the opposite direction and
intends to make an unprotected left turn. Although the right of way rule
generally requires the vehicle making the
turn to ensure that doing so does not obstruct the surrounding traffic,
violations to this rule occur frequently enough in reality that the AV must be
prepared to yield when the BV recklessly proceeds with the turn. As with the
intersection crossing scenario, RILQR improves the pass ratio without
introducing collisions. Figure \ref{fig:upl} illustrates a typical variation in
this scenario.

\begin{figure}
\centering
\includegraphics[width=1.0\linewidth]{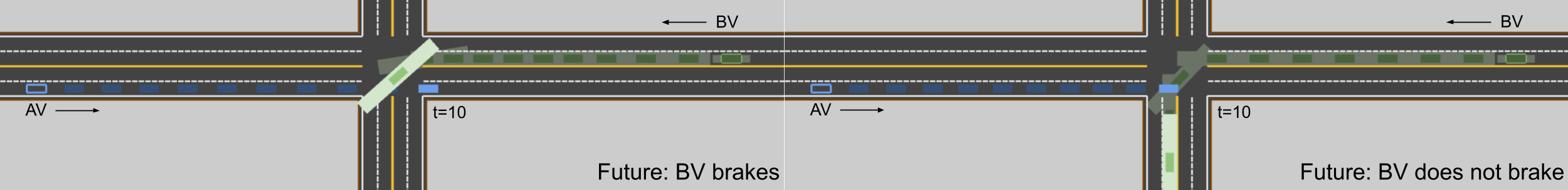}
\caption{RILQR strategy in an unprotected left turn scenario variation, at a
planning iteration 3 seconds into the simulation. The color scheme is the same
as crossing (Fig. \ref{fig:cross}).}
\label{fig:upl}
\end{figure}

\subsection{Overtake on a multi-lane road}
\label{sec:overtake}

As previously discussed, the side-passing scenario (Fig. \ref{fig:reachability_fail})
requires the fast vehicle to understand that although the slow vehicle could
potentially change lanes and thus could appearing in either lane in the
future, an appropriate reaction is always available to complete the overtake.
We design a scenario where the BV, moving at a slow $1m/s$ in the right lane,
may initiate a lane change to the left at a random moment, provided that the
AV is not imminently passing it (in which case the lane change is
considered malicious; see the footnote in Section \ref{sec:intro}). The AV
starts in the left lane, and has a target speed of $10m/s$.
Experiments show that the baseline planner invariably fails to
overtake as expected (see Section \ref{sec:intro}), while RILQR succeeds 100\%
of the time without any collision (Table \ref{tab:overtake}).

\begin{table}
\centering
\begin{tabular}{lrrr}
\toprule
Planner         & Overtake success & BV lane change & AV average speed \\
\midrule
Baseline        & $0\%  $          & $66\%$         & $5.02m/s$ \\
\bf{RILQR}      & $100\%$          & $42\%$         & $9.94m/s$ \\
\bottomrule
\end{tabular}
\caption{Statistics in the overtake scenario (Fig. \ref{fig:overtake}). Overtake
success is defined as the AV being longitudinally ahead at the end of the
simulation. RILQR is able to overtake because it can find a way forward whether
the BV changes lane or not.}
\label{tab:overtake}
\myvspace{-3mm}
\end{table}

\begin{figure}
\includegraphics[width=0.495\linewidth]{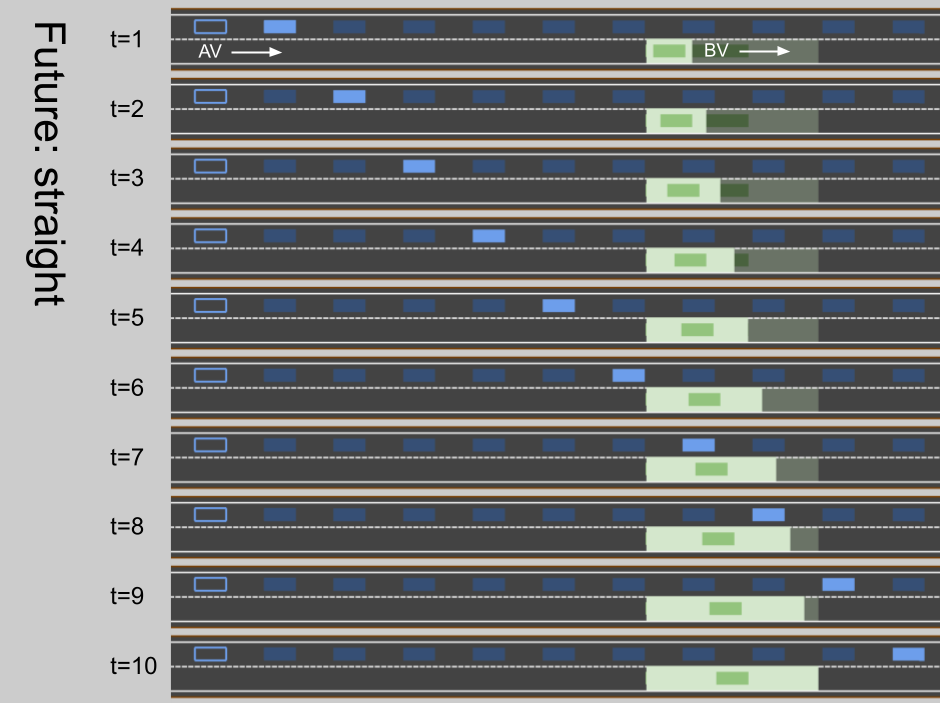}
\includegraphics[width=0.495\linewidth]{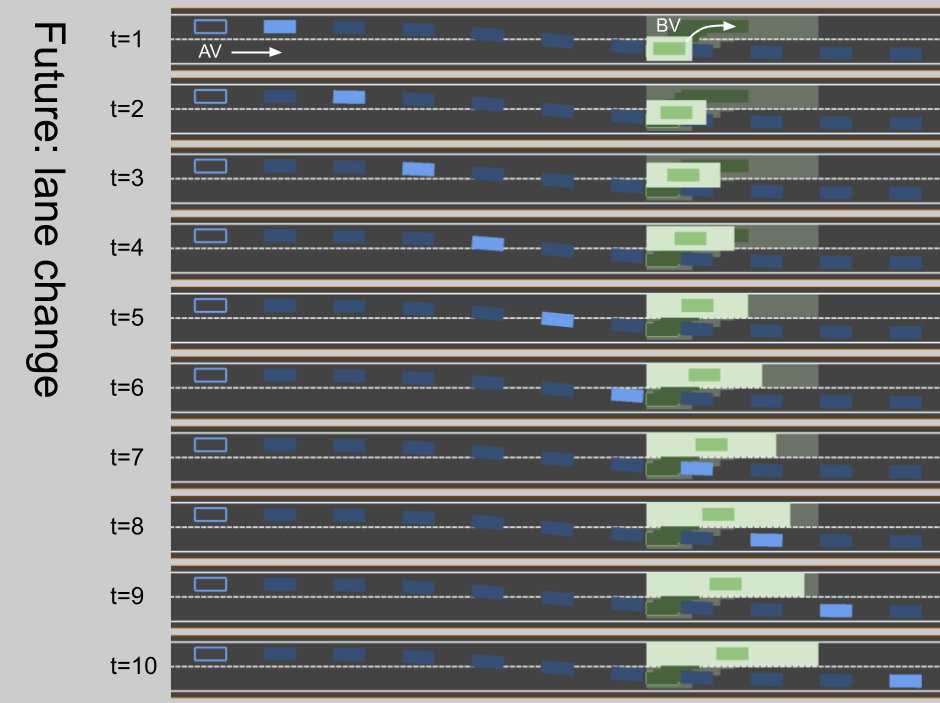}
\includegraphics[width=0.495\linewidth]{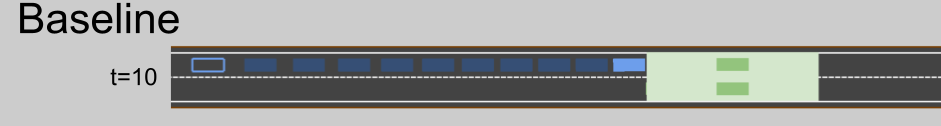}
\caption{RILQR strategy in an overtake scenario variation, at a planning
iteration 4 seconds into the simulation. For comparison, the behavior of the
baseline planner is also included at the bottom.}
\label{fig:overtake}
\myvspace{-3mm}
\end{figure}

\subsection{Merge}

To stress-test RILQR in a complex interaction situation, we set up a merge
challenge scenario where the AV approaches a lane merge into the left lane with
continuous traffic of tightly spaced BVs, each traveling at $10m/s$ (equal to
the AV's default initial and
target speed) and may either brake or maintain speed before arriving at the
merge point, just like
in the crossing scenario. In addition to each BV's individual
braking action, also varied are the AV initial speed and longitudinal position,
to cover the full range of possible initial configurations. The scripted motions
of the BV themselves could sometimes be unrealistic or malicious, especially the
``pincer movement" in certain cases where a BV in the front brakes and another
one right behind it does not brake, and as a result both RILQR and the baseline
encountered collisions, but RILQR does achieve a lower collision rate, and,
more importantly,
a generally more advanced post-merge position in the row of BVs (Table
\ref{tab:merge}), able to complete the merge before the fourth BV passes $88\%$
of the time compared to $30\%$ with the baseline planner, and able to slot in
between the first two BVs twice as often. Since the average
initial position of the AV is longitudinally between the first and the second
BV, this means the behavior of RILQR, instead of being overly aggressive,
is probably closer to a more smooth and fluid merging sequence that avoids the
harsh braking needed to find a merge spot behind latter BVs.

\begin{table}
\centering
\begin{tabular}{lrrrrrr}
\toprule
                &           & \multicolumn{5}{c}{Post-merge position in the BV queue} \\
\cmidrule(r){3-7}
Planner         & Collision & Before 1 & 1-2    & 2-3    & 3-4    & After 4 \\
\midrule
Baseline        & $16\%$    & $1\%$    & $23\%$ & $6\%$  & $0\%$  & $70\%$ \\
\bf{RILQR}      & $5\%$     & $1\%$    & $44\%$ & $31\%$ & $12\%$ & $12\%$ \\
\bottomrule
\end{tabular}
\caption{Statistics in the merge challenge (Fig. \ref{fig:merge}). The numbers
in the post-merge position columns are BV indices. For example, if the AV
merged between the 1st and 2nd BV by the end of the scenario, it is counted as
``1-2", and so on.}
\label{tab:merge}
\end{table}

\begin{figure}
\includegraphics[width=0.495\linewidth]{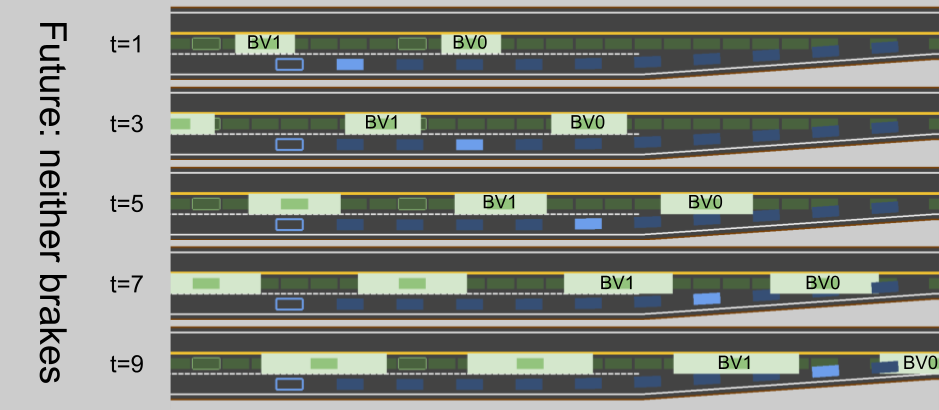}
\includegraphics[width=0.495\linewidth]{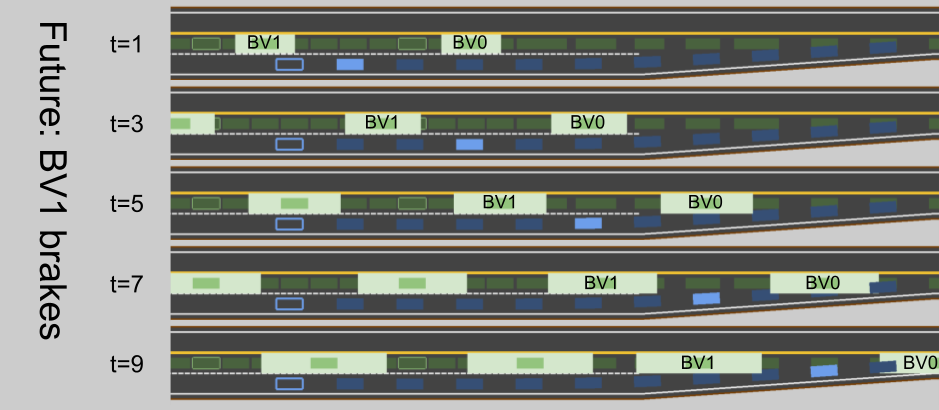}
\includegraphics[width=0.495\linewidth]{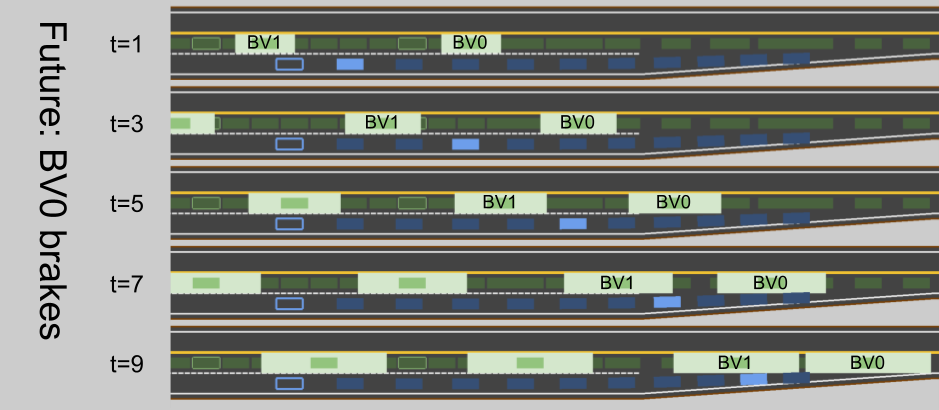}
\includegraphics[width=0.495\linewidth]{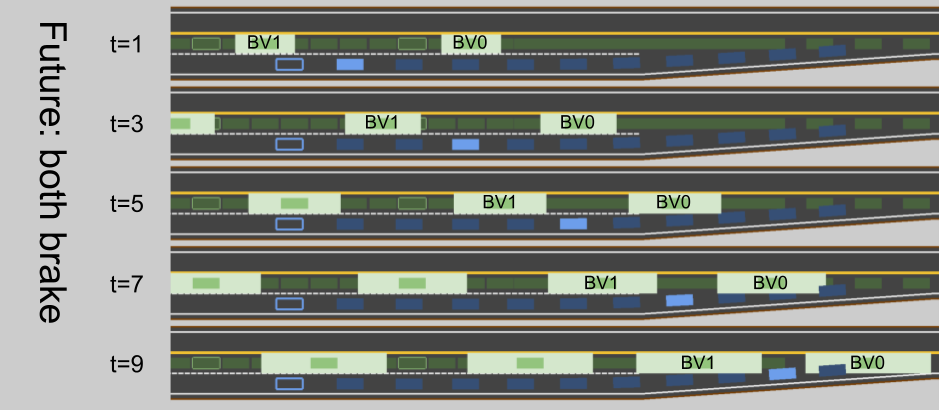}
\caption{RILQR strategy in a merge scenario variation, at a planning
iteration 4 seconds into the simulation. Two objects (BV0 and BV1) branch off
at different times resulting in four distinct futures, and thus four
trajectories in the strategy. The top two trajectories are nearly identical,
both merging between the opening BV0 and BV1. The bottom-right trajectory also
finds the same opening, although overall slower because BV0 brakes. The
bottom-left trajectory cannot merge and has to slow down because BV1's braking
eliminates the opening.}
\label{fig:merge}
\myvspace{-3mm}
\end{figure}


\section{CONCLUSIONS}
\label{sec:conclusions}

In this work, we call for a paradigm shift in the treatment of safety in
motion planning for autonomous driving. Instead of basing the safety condition
on the static knowledge of the currently known environment, we advocate
defining safety with consideration of how the environment will change in the
future, and how the autonomous driving system will react, because that is what
the system does in the general case. This in turn necessitates a shift in the
planner architecture, from planning trajectories to planning strategies. We
outline a safety framework in this context, called CRS, and give an
implementation, called RILQR. The effectiveness of both is demonstrated in
simulations.

This paradigm shift, especially the new \emph{strategy planner} architecture,
opens up many interesting avenues of research. In particular, the module that
partitions the predicted motions of objects from continuous distributions of
trajectories into discrete branches of a strategy plays a crucial
decision-making role because it essentially determines what motions of the
objects need different reactions from the AV, and research on this topic is
scarce. Another example is the question of how to register the current strategy
against the potentially topologically incompatible strategy from the previous
planning iteration, which is often needed to ensure smoothness across plan
handovers in the controller. We believe the concept of strategy planners is
worthy of more investigation effort, and more insight about the planning problem
itself may be uncovered in the process.



%


\bibliographystyle{IEEEtran}
\bibliography{IEEEabrv,bp.bib}

\end{document}